\DeclareMathOperator*{\argmin}{arg\,min}
\DeclareMathOperator{\vp}{\varphi}
\newcommand{\bR}{\mathbb{R}}
\newcommand{\bN}{\mathbb{N}}
\newcommand{\cB}{\mathcal{B}}
\newcommand{\cF}{\mathcal{F}}
\newcommand{\E}{\mathbb{E}}
\newcommand{\cN}{\mathcal{N}}
\newcommand{\cD}{\mathcal{D}}
\newcommand{\cV}{\mathcal{V}}
\newcommand{\bZ}{\mathbb{Z}}
\newcommand{\bK}{\mathbb{K}}
\newcommand{\bJ}{\mathbb{J}}
\newcommand{\cP}{\mathcal{P}}
\newcommand{\bD}{\mathbb{D}}
\newcommand{\bL}{{\bf L}}
\newcommand{\ba}{{\bf a}}
\newcommand{\bb}{{\bf b}}
\newcommand{\bX}{{\bf X}}
\newtheorem{theorem}{Theorem}[section]
\newtheorem{assumption}{Assumption}[section]
\newtheorem{definition}{Definition}[section]
\newtheorem{example}{Example}[section]
\newtheorem{lemma}{Lemma}[section]
\newtheorem{corollary}{Corollary}
\newtheorem{claim}{Claim}
\title{\LARGE \bf
Learning the conditional law: signatures and conditional GANs in filtering and prediction of diffusion processes
}
\author{Fabian Germ and Marc Sabate-Vidales
\thanks{F. Germ and M. Sabate-Vidales are with School of Mathematics and Maxwell Institute for Mathematical Sciences, University of Edinburgh, James Clerk Maxwell Building, Kings Buildings, Peter Guthrie Tait Road Edinburgh, EH9 3FD, U.K.
        {\tt\small \{f.germ,m.sabate-vidales\}@sms.ed.ac.uk},
}
}
\begin{document}

\maketitle
\thispagestyle{empty}
\pagestyle{empty}

\begin{abstract}
We consider the filtering and prediction problem for a diffusion process. The signal and observation are modeled by stochastic differential equations (SDEs) driven by correlated Wiener processes. In classical estimation theory, measure-valued stochastic partial differential equations (SPDEs) are derived for the filtering and prediction measures. These equations can be hard to solve numerically. We provide an approximation algorithm using conditional generative adversarial networks (GANs) in combination with signatures, an object from rough path theory. The signature of a sufficiently smooth path determines the path completely. As a result, in some cases, GANs based on signatures have been shown to efficiently approximate the law of a stochastic process. 
For our algorithm we extend this method to sample from the conditional law, given noisy, partial observation. Our generator is constructed using neural differential equations (NDEs), relying on their universal approximator property. We show well-posedness in providing a rigorous mathematical framework. Numerical results show the efficiency of our algorithm.
\end{abstract}

\section{INTRODUCTION}
\label{sec intro}

In many applications the state of a system is not fully observable and instead only partial or noisy information is available, from which the state has to be estimated.
The development of estimation theory for deterministic and stochastic dynamical systems has received enormous attention over the past decades. A very common model are SDEs, consisting of a signal $(X_t)_{t\in [0,T]}$ and an observation $(Y_t)_{t\in [0,T]}$, the coefficients of which depend on $X$. Letting $\cF^Y_t$ denote the information available from $Y$ until time $t$, then it is known that under some conditions the conditional expectation $\E(X_t|\cF^Y_t)$ minimizes a mean square error. 
Sometimes it is desirable to, more generally, estimate
\begin{equation}
\label{equ cond exp}
\E(\vp(X_t)|\cF^Y_s)
\end{equation}
for times $s,t\in [0,T]$ and arbitrary $\vp\in C^\infty_0$.
This is referred to as smoothing, filtering and prediction if $t<s$, $s=t$ and  $t>s$ respectively.
This paper is concerned with the latter two.\newline
Filtering in particular is a heavily researched area and the literature on it is vast \cite{C2014}.
There are numerous methods of analyzing or deriving \eqref{equ cond exp}, \cite{BC2009}. If the signal and the observation are given by stochastic differential equations (SDEs) driven by Wiener processes, they model a partially observable diffusion process. Then it is possible to derive a measure valued SPDE for the time evolution of the normalized conditional distribution $P(X_t\in dx|\cF^Y_t)$, referred to as Kushner-Shiryaev equation \cite{CCC2014}. Such ``filtering equations'' have been thoroughly investigated for diffusion processes. In a similar way, SPDEs for the smoothing and prediction measures, respectively $P(X_t\in dx|\cF^Y_r)$ and $P(X_t\in dx|\cF^Y_s)$, $s<t<r$, can be obtained \cite{RL2018}. Under additional regularity assumptions,  it is possible to prove the existence and regularity of densities to these measures, see \cite{Y1977,KR1978, R1980, R1984} for early works on this and \cite{GG1,GG2} for a recent extension to systems with discontinuous noise.\newline
The filtering equations are numerically challenging to solve and different approaches have been developed to approximate their solutions \cite{BC2009}. One of the most common, the splitting method, seperates the right-hand side into a deterministic and a random operator and solves them separately \cite{GK2003, GK2003-2}.
This either requires additional assumptions on the spaces involved \cite{IR2000} or relatively high regularity of the coefficients \cite{GK2003, GK2003-2}. Moreover, numerical solvers for (S)PDEs often suffer from the curse of dimensionality. Recently the splitting-up method was combined with a neural net representation to overcome this \cite{CLO2022}.  The prediction density can then be obtained using the transition probability of the process $Z=(X,Y)$ \cite{R1980,RL2018}.\newline
While filtering and prediction theory are mainly developed in a stochastic setting, the techniques have also proven successful in observer design for determinisic systems, where the disturbance is a single, often continuous path \cite{RU1999,AGM2020,AGM2022}. \newline
Rough path theory is a young field, developed to treat differential equations driven by paths of low regularity that escape the scope of classical integration, see \cite{LCL2007} and the references therein. Signatures, as sequences of iterated integrals with similarity to the Taylor expansion, are an object arising in rough path theory which encode a surprising amount of information about the path. Early works by Chen \cite{C1954, C1958} show that the signature map, sending a path to its signature, is injective when restricting it to a certain class of paths and later results \cite{HL2010, BGLY2016} establish uniqueness up to tree-like equivalence for paths with bounded variation. These results motivated the use of signatures in machine learning \cite{CK2016} as paths obtained by interpolation of data can be uniquely characterized by their signature \cite{CO2018}.\newline
Generative Adversarial Networks (GANs) were first introduced in 2014~\cite{goodfellow2014generative} as a novel way to learn data distributions. In this setting, two neural networks, the \textit{generator} and the \textit{discriminator}, compete to generate and verify potentially new or fake data. In a time series context, there have been many recent contributions, especially attempting to learn conditional laws, \cite{marc-sig, yoon2019time}. \newline   
Neural differential equations (NDEs) have also had a sharp increase in their popularity and their applications in recent years with the release of several torch-based libraries \cite{KMFL2020}, \cite{chen2018neural}. In an NDE, a process is modeled as an ordinary/stochastic/controlled differential equation with a neural network (NN) as vector field(s). 
The relationship between deep NNs and continuous time models is clearly presented in \cite{chen2018neural} where the authors observe that a residual NN is a first-order Taylor expansion of an ODE with an NN as the vector field. Furthermore, they provide a memory-efficient training scheme (initially proposed by \cite{lecun1988theoretical} and \cite{pearlmutter1995gradient}) where the gradient of the loss in terms of the vector field's parameters is calculated as the solution of another backward neural ODE.  We refer the reader to \cite{kidger2022neural} for a comprehensive survey of NDEs and their different training methods. 

This paper presents a first signature-GANs based estimator for the conditional and prediction law of a diffusion process. More precisely, our model learns the family of conditional probability measures 
$$
P(X_t\in dx|\{Y_r:r\in [0,s]\}),\quad s\leq t,
$$  
by using the universal approximation property of neural differential equations (NDEs). We prove that the estimator is well-posed and give explicit forms of the neural nets involved. Numerical results show the efficiency of our method. In a subsequent paper we will provide a proof of convergence as well as error bounds for the learned conditional distributions to the true ones.

In section 2 we state conditions under which the filtering and prediction measures admit densities, required for well-posedness of our estimator. Section 3 introduces the signature and collects useful results. In section 4 we present our estimator and show well-posedness. Numerical results are shown in section 5.

\textbf{Notation.} Throughout the paper we use the following notation. We fix a $T>0$ and consider the time interval $[0,T]$. We denote by $\cD\subset [0,T]$ a finite set of points including $0$ and $T$. We denote by $C_\cD=C_\cD([0,T],\bR^d)$ the set of continuous functions that are linear between the points in $\cD$. If $(X_t)_{t\in [0,T]}$ is a stochastic process with contiunous sample paths, then we denote by $(\hat{X})_{t\in [0,T]}$ the process constructed by interpolating the points $X_{t_i}$, where $t_i\in\cD$. Moreover, for any process $(X_t)_{t\in [0,T]}$ the notation $\bar{X}$ means the time-augmented path $(t,X_t)_{t\in [0,T]}$. For $p\geq 1$ we denote by $\cV^p([0,T],\bR^d)$ the space of $\bR^d$-valued continuous paths with finite $p$-variation \cite{LCL2007}. On it, we consider the norm $\|X\|_{\cV^p}:=\|X\|_{BV,p}+\|X\|_\infty$, with $\|\cdot\|_{BV,p}$ denoting the $p$-variation and $\|X\|_\infty$ the essential supremum.


\section{The filtering and prediction densities}
Let $(\Omega,(\cF_t)_{t\in [0,T]},P)$ be a complete filtered probability space.
Consider the signal-observation system
\begin{equation}
\label{equ SDE}
    \begin{split}
        dX_t &= b(t,X_t,Y_t)\,dt + \sigma(t,X_t,Y_t)\,dW_t + \rho(t,X_t,Y_t)\,dV_t\\
        dY_t &= h(t,X_t,Y_t)\,dt + dW_t,
    \end{split}
\end{equation}
where $(X_t)_{t\in [0,T]}$ and $(Y_t)_{t\in [0,T]}$ are $\bR^{d}$ and $\bR^{d'}$-valued respectively, $b$, $h$, $\sigma$ and $\rho$ are $\cB(\bR^{1+d+d'})$-measurable functions taking values in $\bR^{d}$, $\bR^{d'}$, $\bR^{d\times d'}$ and $\bR^{d\times d''}$ respectively, and where $(W_t,V_t)_{t\in [0,T]}$ is a $d' + d''$-valued $\cF_t$-Wiener process.

\begin{assumption}
\label{assumption SDE}
(i) The initial condition $Z_0 = (X_0,Y_0)$ is  $\cF_0$-measurable, independent of $(W_t,V_t)_{t\in [0,T]}$ and satisfies $\E|Z_0|^{2}<\infty$.\newline
(ii) There are constants $K_0,K_1\geq 0$ such that for all $z\in \bR^{d+d'}$, $t\in [0,T]$ we have $|h(t,z)|\leq K_0$ and
$$
|b(t,z)|  + |\sigma(t,z)| 
+ |\rho(t,z)|\leq K_0 + K_1|z|.
$$
(iii) There exists $L\geq 0$ such that for $z_1,z_2\in\bR^{d+d'}$, $t\in [0,T]$,
$$
|b(t,z_1)-b(t,z_2)| + |h(t,z_1)-h(t,z_2)| + |\sigma(t,z_1)-\sigma(t,z_2)|
$$
$$
 + |\rho(t,z_1)-\rho(t,z_2)| +|(\sigma\cdot h)(t,z_1)-(\sigma\cdot h)(t,z_2) | \leq L|z_1-z_2|.
$$
\end{assumption}

If Assumption \ref{assumption SDE} is satisfied, then by a well-known theorem by It\^o we know that there exists a unique $\cF_t$-adapted solution  $Z=(Z_t)_{t\in [0,T]}=((X_t,Y_t))_{t\in [0,T]}$ to \eqref{equ SDE} such that $Z_0 = (X_0,Y_0)$ almost surely, $Z$ has continuous sample paths almost surely and $\E\sup_{t\in [0,T]}|Z_t|^2<\infty$.  Moreover, it covers a wide range of applications, as often more regularity than Lipschitzness is used in SDE models.

Let for $t\in [0,T]$
$$
\cF^Y_t:=\sigma\big(\{Y_r:r\in [0,t] \}\big)\vee \cN
$$
be the filtration generated by $Y$ and completed by the zero sets $\cN$. 
The goal of filtering and prediction is to derive and analyze the quantities 
$$
\E(\vp(X_t)|\cF^Y_t)\quad\text{and}\quad \E(\vp(X_t)|\cF^Y_s),\quad s\leq t,
$$
respectively, for $t\in [0,T]$ and $\vp\in C_0^\infty(\bR^{d})$. Fix an $s\in [0,T]$. In many applications it is desirable to obtain real valued density processes,  $(\pi_t)_{t\in [0,T]}$ and $(\pi_{t,s})_{t\in [0,T]}$, such that for each $s,t\in [0,T]$, $t\geq s$,
$$
\pi_t = \frac{P(X_t\in dx|\cF^Y_t)}{dx}\quad\text{and}\quad
\pi_{t,s} = \frac{P(X_t\in dx|\cF^Y_s)}{dx}
$$
almost surely and hence for all $\vp\in C_0^\infty$ and $s,t\in [0,T]$, 
$$
\E(\vp(X_t)|\cF^Y_s) = \int_{\bR^{d}}\vp(x)\pi_{t,s}(x)\,dx,
$$
almost surely, where $\pi_{s,s}=\pi_s$.

The following result is well-known. For a proof and more details we refer to \cite{BC2009, C2014, CCC2014} for classical literature, or \cite{GG1} and \cite{GG2} for a recent generalization to a wider class of systems.

\begin{theorem}
\label{theorem filtering density}
Let Assumption \ref{assumption SDE} (i) \& (ii) hold. Then there exists an $\cF^Y_t$-adapted measure valued process $P_t(dx)=P(X_t\in dx|\cF^Y_t)$ such that almost surely for each $t\in [0,T]$ and 
$\vp\in C_0^\infty (\bR^d)$,
$$
\E(\vp(X_t)|\cF^Y_t) = \int_{\bR^d}\vp(x)\,P_t(dx).
$$
If moreover Assumption \ref{assumption SDE} (iii) holds  and for $\pi_0=P(X_0\in dx|\cF^Y_0)/dx$ we have $\E|\pi_0|_{L_2}^2<\infty$, then there exists an $L_2$-valued weakly continuous process $(\pi_t)_{t\in [0,T]}$ such that for each $t\in [0,T]$ almost surely 
$$
\pi_t = P(X_t\in dx|\cF^Y_t)/dx.
$$
\end{theorem}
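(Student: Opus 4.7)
The strategy is the classical reference-measure approach of Kallianpur--Striebel, combined with variational SPDE theory to produce the density. First I would apply Girsanov's theorem: under Assumption \ref{assumption SDE}(ii), $h$ is bounded and therefore $Z_t := \exp\bigl(-\int_0^t h(r,X_r,Y_r)^\top dW_r - \tfrac12\int_0^t|h(r,X_r,Y_r)|^2 dr\bigr)$ is a true martingale, so I can define a reference measure $\tilde P$ by $d\tilde P/dP = Z_T$. Under $\tilde P$, $Y$ is a standard Wiener process independent of $X_0$ and $V$, and the dynamics of $X$ are driven by the $\tilde P$-Brownian motion $Y$ (via a $dY_t$ term) and the independent $V$. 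The Kallianpur--Striebel formula then gives
$$
\E(\vp(X_t)\mid\cF^Y_t) \;=\; \frac{\tilde\E\bigl(\vp(X_t)\,Z_t^{-1}\mid\cF^Y_t\bigr)}{\tilde\E\bigl(Z_t^{-1}\mid\cF^Y_t\bigr)},
$$
and the measure-valued process $P_t(dx):=P(X_t\in dx\mid \cF^Y_t)$ is obtained as the normalization of the unnormalized measure $\rho_t(A):=\tilde\E(1_A(X_t)Z_t^{-1}\mid\cF^Y_t)$. Adaptedness and the integration identity in the theorem are then immediate from the definition and a monotone-class argument extending them from indicators to $\vp\in C_0^\infty$.

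The substantive point is the existence of a density. I would derive the Zakai equation for $\rho_t$ by It\^o's formula applied to $\vp(X_t)Z_t^{-1}$ under $\tilde P$ and conditioning on $\cF^Y_t$: writing $L$ and $M^k$ for the second-order generator and first-order ``observation operators'' associated with \eqref{equ SDE}, one gets
$$
\rho_t(\vp)=\rho_0(\vp)+\int_0^t\rho_s(L\vp)\,ds+\sum_k\int_0^t\rho_s(M^k\vp)\,dY_s^k.
$$
Formally dualising this equation in $L_2(\bR^d)$ yields a linear SPDE whose unknown is the candidate density $\tilde\pi_t$ of $\rho_t$. Because $\pi_0\in L_2$, I can run the variational (Gelfand-triple) existence theory for SPDEs in the pair $(H^1,L_2,H^{-1})$: Assumption \ref{assumption SDE}(ii)--(iii) gives the coercivity/boundedness conditions on the coefficients of the dual operator $L^*$ and $(M^k)^*$, so there is a unique $L_2$-valued solution $\tilde\pi$ with continuous (and in particular weakly continuous) paths in $L_2$. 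A standard identification argument, testing $\tilde\pi$ against $\vp\in C_0^\infty$ and comparing with the Zakai equation for $\rho_t(\vp)$, shows that $\tilde\pi_t$ is indeed a density of $\rho_t$ almost surely for every fixed $t$.

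Finally, normalisation produces the filtering density: the normalising constant $N_t:=\int_{\bR^d}\tilde\pi_t(x)\,dx=\tilde\E(Z_t^{-1}\mid\cF^Y_t)$ is a strictly positive martingale under $\tilde P$ (again using boundedness of $h$), so setting $\pi_t:=\tilde\pi_t/N_t$ yields an $L_2$-valued, $\cF^Y_t$-adapted, weakly continuous process satisfying $\pi_t=P(X_t\in dx\mid\cF^Y_t)/dx$ almost surely, as required. The main obstacle I anticipate is the density step: the Zakai SPDE is in general only degenerate parabolic (there is no ellipticity in Assumption~\ref{assumption SDE}), so $L_2$-regularity has to be propagated using the variational framework and the structural cancellation between the It\^o correction of the $dY$-integral and the second-order part of $L^*$, rather than by any smoothing effect; this is exactly where the hypothesis $\pi_0\in L_2$ is indispensable, and why the cited references \cite{BC2009,GG2} are needed to close the argument rigorously.
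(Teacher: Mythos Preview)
The paper does not actually prove this theorem: immediately before the statement it says ``The following result is well-known. For a proof and more details we refer to \cite{BC2009, C2014, CCC2014} for classical literature, or \cite{GG2} for a recent generalization to a wider class of systems.'' Your outline (Girsanov to a reference measure, Kallianpur--Striebel, Zakai equation, then the Krylov--Rozovski\u{\i} variational $L_2$ theory for the density) is precisely the route taken in those references, so your proposal is aligned with what the paper defers to.

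One technical caveat worth flagging in your write-up: Assumption~\ref{assumption SDE}(ii) gives only linear growth of $b,\sigma,\rho$, not boundedness, so the dual Zakai operator has unbounded coefficients and does not sit directly in the standard $(H^1,L_2,H^{-1})$ triple; to make the variational step rigorous one typically passes to weighted Sobolev spaces (as in \cite{GG2} and related works) or imposes additional decay on $\pi_0$. You already note that the cited references are needed to close the density argument, so this is consistent with your plan, but it would be worth making explicit that the ``coercivity/boundedness'' you invoke is only available after such a weighting.
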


The following relates the prediction density to the filtering density, \cite{RL2018}. For that purpose, denote by $p(t_1,z;t_0,z_0)$ the transition probability for the process $(Z_t)_{t\in [0,T]}=((X_t,Y_t))_{t\in [0,T]}$, that is, for $t_0,t_1\in [0,T]$, $t_1\geq t_0$ and $z_0\in \bR^{d+d'}$, for every $B\in \cB(\bR^{d+d'})$,
$$
P(Z_{t_1}\in B| Z_{t_0} = z_0) = \int_{B}p(t_1,z;t_0,z_0)\,dz.
$$

\begin{theorem}
\label{th:prediction}
Let Assumption \ref{assumption SDE} hold and let $\E|\pi_0|_{L_2}^2<\infty$.
Assume the process $Z$ has the transition density $p=p(t_1,z;t_0,z_0)$, let $(\pi_t)_{t\in [0,T]}$ be the filtering density from Theorem \ref{theorem filtering density} and fix $s\in [0,T]$. Then there exists a measure valued process $(P_{t,s})_{t\in [s,T]}$ such that \newline 
(i) for each $t\in [s,T]$, $P_{t,s}$ is the regular conditional distribution of $X_t$ given $\cF_s^Y$ and such that \newline
(ii) for each $t\in [s,T]$, $P_{t,s}$ has the Radon-Nikodym derivative
$$
\pi_{t,s}(x) = \int_{\bR^{d}}\int_{\bR^{d'}} p(t,x,y;s,x',Y_s)\pi_s(x')\,dx'dy.
$$
\end{theorem}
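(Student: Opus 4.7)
The plan is to derive the claimed formula by starting from the defining property of the conditional law and working through the Markov structure of the joint process $Z=(X,Y)$, the tower property of conditional expectations, and the filtering density from Theorem \ref{theorem filtering density}. It suffices to show that for every $\vp\in C_0^\infty(\bR^d)$ and every $t\in[s,T]$, the identity
$$
\E\bigl(\vp(X_t)\,|\,\cF_s^Y\bigr)=\int_{\bR^d}\vp(x)\,\pi_{t,s}(x)\,dx
$$
holds almost surely, with $\pi_{t,s}$ as in the theorem statement. Existence of the regular conditional distribution $P_{t,s}$ will then follow from a standard monotone class / separability argument once one checks that $\pi_{t,s}\geq 0$ almost surely and integrates to $1$ in $x$; both properties are inherited from the non-negativity of $p$ and $\pi_s$ together with Fubini.

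The first step is to exploit the inclusion $\cF_s^Y\subset\cF_s$ via the tower property, writing
$$
\E\bigl(\vp(X_t)\,|\,\cF_s^Y\bigr)=\E\bigl(\E(\vp(X_t)\,|\,\cF_s)\,\big|\,\cF_s^Y\bigr).
$$
Under Assumption \ref{assumption SDE}, the solution $Z=(X,Y)$ to \eqref{equ SDE} is a strong Markov process with respect to $(\cF_t)$, so by the assumed existence of the transition density,
$$
\E\bigl(\vp(X_t)\,|\,\cF_s\bigr)=\E\bigl(\vp(X_t)\,|\,Z_s\bigr)=\int_{\bR^d}\!\int_{\bR^{d'}}\vp(x)\,p(t,x,y;s,X_s,Y_s)\,dy\,dx.
$$
The second step is to pass the conditional expectation inside the integrals, which is legitimate by Fubini provided $(x,y,\omega)\mapsto\vp(x)\,p(t,x,y;s,X_s(\omega),Y_s(\omega))$ is jointly measurable and integrable; the compact support of $\vp$ and the fact that $p(t,\cdot,\cdot;s,\cdot,\cdot)$ is a probability density make this routine. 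This yields
$$
\E\bigl(\vp(X_t)\,|\,\cF_s^Y\bigr)=\int_{\bR^d}\!\int_{\bR^{d'}}\vp(x)\,\E\bigl(p(t,x,y;s,X_s,Y_s)\,\big|\,\cF_s^Y\bigr)\,dy\,dx.
$$

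The third step is to use that $Y_s$ is $\cF_s^Y$-measurable, so that for each fixed $(x,y)$ the function $x'\mapsto p(t,x,y;s,x',Y_s)$ is $\cF_s^Y$-measurable in $\omega$ through $Y_s$, and its conditional expectation reduces to integration against the filtering density: by Theorem \ref{theorem filtering density},
$$
\E\bigl(p(t,x,y;s,X_s,Y_s)\,\big|\,\cF_s^Y\bigr)=\int_{\bR^d}p(t,x,y;s,x',Y_s)\,\pi_s(x')\,dx'.
$$
Substituting back and again invoking Fubini produces the formula
$$
\E\bigl(\vp(X_t)\,|\,\cF_s^Y\bigr)=\int_{\bR^d}\vp(x)\left(\int_{\bR^{d'}}\!\int_{\bR^d}p(t,x,y;s,x',Y_s)\,\pi_s(x')\,dx'\,dy\right)dx,
$$
identifying the bracket with $\pi_{t,s}(x)$.

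The delicate step I expect to be the main obstacle is the justification of the two Fubini interchanges together with the assertion that $x'\mapsto p(t,x,y;s,x',Y_s)\pi_s(x')$ is integrable almost surely and that its integral defines a jointly measurable, $\cF_s^Y$-adapted process in $(t,x,\omega)$. This needs a careful measurable selection of the transition density (e.g., the Borel version), plus an $L^1$ bound of the form $\int p(t,x,y;s,x',Y_s)\,dx\,dy=1$ that lets one move between $L_\omega^1$ and pathwise statements. Once these technicalities are in place, non-negativity and unit mass of $\pi_{t,s}$ follow by taking a monotone sequence $\vp_n\uparrow 1$ and applying monotone convergence, which together with the uniqueness of the conditional expectation delivers the regular version $P_{t,s}$ claimed in (i) and the density representation in (ii).
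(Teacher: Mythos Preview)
The paper does not supply its own proof of this theorem; it is quoted from \cite{RL2018} with only the remark that ``an immediate calculation'' verifies the density identity. So there is no in-paper argument to compare against.

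Your approach is the natural one and is essentially what a proof of this statement looks like: tower through $\cF_s$, invoke the Markov property of $Z=(X,Y)$ to write the inner conditional expectation via the transition density, then pull $Y_s$ out as $\cF_s^Y$-measurable and integrate $X_s$ against the filtering density $\pi_s$. The structure is correct and the technical worries you flag (joint measurability of a Borel version of $p$, the two Fubini interchanges, recovering the regular conditional distribution from the density) are the right ones.

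One point you should make explicit: Theorem~\ref{theorem filtering density} as stated only gives $\E(\vp(X_s)\mid\cF_s^Y)=\int\vp\,\pi_s$ for $\vp\in C_0^\infty(\bR^d)$, whereas in your third step you apply it to $x'\mapsto p(t,x,y;s,x',Y_s)$, which is neither smooth nor compactly supported and also depends on the $\cF_s^Y$-measurable variable $Y_s$. You need a short monotone-class / density argument to pass from $C_0^\infty$ test functions to bounded Borel functions of $(X_s,Y_s)$, using that $\pi_s$ is a genuine probability density and that $Y_s$ can be frozen. This is routine but is precisely the kind of step that, together with the Fubini justifications you already mention, constitutes the actual content of the proof.
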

For fixed $s\in [0,T]$ we call $(\pi_{t,s})_{t\in [s,T]}$ the prediction density of $(X_t)_{t\in [s,T]}$ given $\cF_s^Y$. Indeed, an immediate calculation shows that for each $\vp\in C_0^\infty$, $t\in [s,T]$, almost surely
$$
\E(\vp(X_t)|\cF_s^Y) = \int_{\bR^{d}}\vp(x)\pi_{t,s}(x)\,dx.
$$
Henceforth we assume for the conditions of Theorem \ref{th:prediction} to hold.


\section{Signatures and elements from rough path theory}

In this section we collect some objects and properties that will be used later. If not mentioned otherwise, the reader is referred to \cite{LCL2007} and the references therein, as well as to \cite{CK2016}, for the use of signatures in machine learning.

It is first necessary to introduce the space of formal series of tensors, which is the space signatures live in. For simplicity, we restrict ourselves to tensors over $\bR^d$. We denote by $(\bR^d)^{\otimes n}$ the usual space of tensors over $\bR^d$ of order $n\geq 0$.

\begin{definition} 
(i) The space of formal series of tensors of $\bR^d$, denoted by $T((\bR^d))$, is defined as space of sequences,
$$
T((\bR^d)):=\{\ba=(a_0,a_1,a_2,\dots):a_n\in (\bR^d)^{\otimes n}, n\in\bN \}.
$$
For two elements $\ba = (a_0,a_1,\dots)$ and $\bb = (b_0,b_1,\dots)$ we can define an addition and a product by
$$
\ba + \bb = (a_0+b_0,a_1+b_1,\dots),\quad \ba\otimes \bb = (c_0,c_1,\dots),
$$
where for each $n\in \bN_0$, with the usual (finite-dimensional) tensor product $\otimes$, $
c_n = \sum_{k=0}^n a_k\otimes b_{n-k}.
$\newline
(ii) Let $N\in \bN$ and define $B_N=\{\ba\in T((\bR^d)): a_0 = \cdots = a_N = 0\}.$ Then the truncated tensor algebra of order $N$ is the quotient algebra
$
T^N(\bR^d) = T((\bR^d))/B_N,
$
with the canonical homomorphism $\mathfrak{p}_N:T((\bR^d))\rightarrow T^N(\bR^d)$.
\end{definition}
We can naturally identify $T^N(\bR^d)$ with $\bR\oplus\bR^d\oplus\cdots\oplus(\bR^d)^{\otimes N}$.
Now we can introduce the (truncated) signature.

\begin{definition}
Let $X:[0,T]\rightarrow \bR^d$ be a path of finite variation and for $s,t\in [0,T]$, $n\in \bN$ define the iterated integral
$$
X_{s,t}^{(n)}:=\underset{s<s_1<\cdots<s_n<t}{\int\cdots\int}dX_{s_1}\otimes\cdots\otimes dX_{s_n}.
$$
Then the signature of $X$ over $(s,t)\subset [0,T]$ is
$$
\bX_{s,t} = (1,X^{(1)}_{s,t}, X^{(2)}_{s,t},\dots)\in T((\bR^d)).
$$
Similarly, the truncated signature is
$$
\bX_{s,t}^N = (1,X^{(1)}_{s,t}, \dots,X^{(N)}_{s,t})\in T^N(\bR^d).
$$
\end{definition}

\begin{example}
Consider $X_t=t$ on $[0,T]$. Then
$$
\bX_{0,T}=(1,X_T-X_0,\frac{(X_T-X_0)^2}{2!},\frac{(X_T-X_0)^3}{3!},\dots).
$$
\end{example}

Though signature captures deep geometric properties of a path, it does not necessarily characterise the path completely. It was shown that for continuous paths of bounded variation, the signature determines the path up to tree like equivalence \cite{HL2010}. A sufficient result for the present case is the following, Theorem 2.29 in \cite{LCL2007}.

\begin{theorem}
Among all paths with bounded variation sharing  the same signature, there exists a path with minimal length, which is unique up to reparametrization.
\end{theorem}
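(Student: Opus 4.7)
The plan is to combine a compactness argument with the Hambly--Lyons characterization: two bounded-variation paths share the same signature if and only if their concatenation, with one of them time-reversed, is tree-like.

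\emph{Existence.} Since the signature is invariant under reparametrization, I may restrict attention to paths in $\cV^1([0,1],\bR^d)$ parametrized at constant speed and starting from a fixed point (say, the origin). Let $g$ denote the prescribed signature and set
\[
\ell^* := \inf\{\|X\|_{1\text{-var}} : X\in \cV^1([0,1],\bR^d),\ \bX_{0,1}=g\}.
\]
Take a minimizing sequence $(X_n)$. Each $X_n$ is Lipschitz with constant $\|X_n\|_{1\text{-var}}$, and these constants are uniformly bounded, so Arzel\`a--Ascoli yields a uniformly convergent subsequence $X_{n_k}\to X^*$ with $X^*$ Lipschitz and $\|X^*\|_{1\text{-var}}\leq \ell^*$. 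On sets of uniformly bounded $1$-variation, the truncated signature map is continuous in the uniform topology (Helly's theorem applied level by level to the iterated integrals, combined with dominated convergence); hence $\bX^*_{0,1}=g$, and by definition of $\ell^*$ one has $\|X^*\|_{1\text{-var}}=\ell^*$.

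\emph{Uniqueness up to reparametrization.} Let $X$ and $Y$ be two minimizers with the above normalization, and form the concatenation $Z := X \star \overleftarrow{Y}$, where $\overleftarrow{Y}$ denotes time-reversal and $\star$ denotes concatenation. Under concatenation the signature multiplies in $T((\bR^d))$, while time-reversal inverts it in the group of group-like elements, so the signature of $Z$ equals $g\otimes g^{-1} = (1,0,0,\dots)$. By the Hambly--Lyons theorem $Z$ is therefore tree-like, and $\|Z\|_{1\text{-var}}=2\ell^*$. It remains to deduce that $X$ and $Y$ coincide up to monotone reparametrization: a tree-like loop admits a description as a walk on an $\bR$-tree, and any portion of $Z$ where $\overleftarrow{Y}$ does not exactly retrace the corresponding portion of $X$ would furnish a cancellable sub-loop whose excision produces a path joining the endpoints of $X$ (respectively $Y$) with the same signature but strictly smaller $1$-variation, contradicting the minimality of $\ell^*$.

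\emph{Main obstacle.} The delicate step is the last one: turning tree-likeness plus minimality into an explicit monotone reparametrization. This requires careful use of the reduced path associated with the tree-like loop $Z$ together with a book-keeping of total variation under the surgery that removes cancellable loops. The continuity of the signature map needed in the existence step is non-trivial but by now standard in rough-path analysis.
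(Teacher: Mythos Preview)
The paper does not prove this statement; it merely cites it as Theorem~2.29 of \cite{LCL2007} and immediately uses the corollary that the signature map is injective on $C_\cD$. There is therefore no in-paper argument to compare your proposal against.

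For what it is worth, your outline follows the standard route (compactness for existence, the Hambly--Lyons tree-like characterisation \cite{HL2010} for uniqueness). The existence part is fine. In the uniqueness part you are effectively re-deriving the ``reduced path'' assertion of \cite{HL2010} from its ``trivial signature $\Leftrightarrow$ tree-like'' assertion; this is legitimate but, as you note yourself, the step from ``$Z=X\star\overleftarrow{Y}$ is tree-like and both halves have minimal length'' to ``$Y$ is a reparametrisation of $X$'' is exactly the non-trivial content, and your sketch (excise a cancellable sub-loop to shorten one of the two halves) needs the $\bR$-tree machinery of \cite{HL2010} to be made precise. So the proposal is essentially a roadmap to the known proof rather than a new one, and it is not self-contained without that machinery.
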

For linearly interpolated data points this in particular means the following. Recall the notation introduced in section \ref{sec intro}.
\begin{corollary}
\label{cor:signature uniqueness}
For $X,Y\in C_{\cD}$, $\bX = {\bf Y}$ only if $X=Y$.
\end{corollary}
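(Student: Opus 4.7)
The plan is to apply the preceding theorem on the uniqueness of minimum-length representatives within a signature equivalence class. Both $X,Y\in C_\cD$ are piecewise linear on the fixed grid $\cD=\{0=t_0<t_1<\cdots<t_n=T\}$ and therefore of bounded variation, with total variation equal to the sum of the Euclidean lengths of their consecutive line segments; this puts us squarely in the setting of the theorem.

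First I would pass to the time-augmented paths $\bar X(t)=(t,X(t))$ and $\bar Y(t)=(t,Y(t))$, as is standard in signature-based learning following \cite{CO2018}. These are still piecewise linear on $\cD$ and have a strictly monotone first coordinate, so they admit no tree-like sub-pieces and no backtracking is possible. Consequently each of them is, up to reparametrization, the unique minimum-length representative within its own signature equivalence class.

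Next, using that $\bX=\bY$ forces the signatures of $\bar X$ and $\bar Y$ to agree as well, the preceding theorem supplies a reparametrization $\phi:[0,T]\to [0,T]$ with $\bar Y=\bar X\circ\phi$. Reading off the first coordinate immediately gives $\phi(t)=t$, so $\phi$ is the identity, $\bar X=\bar Y$ pointwise, and therefore $X=Y$.

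The main obstacle is the reparametrization invariance of the untransformed signature of $X$ alone: two elements of $C_\cD$ that trace the same line segment at different speeds---for instance with an intermediate node placed at different fractions of the segment---would share the same signature while differing at the grid nodes. The step of passing to the time-augmented path is therefore essential; it is this convention, implicit in the statement of the corollary and standard in the remainder of the paper, that allows the uniqueness theorem to be invoked directly and concludes the argument.
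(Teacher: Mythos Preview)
The paper gives no proof of this corollary at all; it is simply asserted as an immediate consequence of the preceding minimum-length theorem from \cite{LCL2007}. Your write-up is therefore considerably more careful than the paper, and in fact uncovers a genuine imprecision: as literally stated, with $\bX$ denoting the signature of $X$ itself, the corollary is false. Your counterexample is valid---for $d=1$ and $\cD=\{0,\tfrac12,1\}$ the two piecewise-linear paths with nodes $(0,\tfrac14,1)$ and $(0,\tfrac34,1)$ lie in $C_\cD$, have identical signature (in one dimension the signature sees only the total increment), yet differ at $t=\tfrac12$. The fix via time augmentation $\bar X=(t,X_t)$ is the right one and is indeed the convention used tacitly elsewhere in the paper and in \cite{CO2018}; once it is in place, your argument (strictly monotone first coordinate $\Rightarrow$ tree-reduced $\Rightarrow$ unique up to reparametrization $\Rightarrow$ first coordinate forces $\phi=\mathrm{id}$) is correct.

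There is, however, one presentational gap you should repair. The sentence ``using that $\bX=\bY$ forces the signatures of $\bar X$ and $\bar Y$ to agree as well'' reads as a deduction, and under the paper's notation it is a false one: equality of the signatures of $X$ and $Y$ does \emph{not} imply equality of the signatures of $\bar X$ and $\bar Y$. What you actually do---and say only in your final paragraph---is reinterpret the hypothesis so that $\bX$ already stands for the signature of the time-augmented path. State that reinterpretation up front and drop the word ``forces''; then the argument is clean. As written, the middle paragraph and the final paragraph are in tension, and a reader following the paper's declared notation will see an invalid step before reaching your clarification.
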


It is clear that for a basis $(e_1,\dots,e_d)$ of  $\bR^d$ and its dual basis $(e_1^*,\dots,e_d^*)$ of $(\bR^d)^*$, the elements $(e_I=e_{i_1}\otimes\cdots\otimes e_{i_n})_{I = \{i_1,\dots,i_n\}\subset \{1,\dots,d\}^n}$ form a basis of $(\bR^d)^{\otimes n}$, just as the elements $(e_I^*=e^*_{i_1}\otimes\cdots\otimes e^*_{i_n})_{I = \{i_1,\dots,i_n\}\subset \{1,\dots,d\}^n}$ form a basis of $((\bR^d)^*)^{\otimes n}$.
Recall that we can canonically identify $((\bR^d)^*)^{\otimes n}$ with $((\bR^d)^{\otimes n})^*$. Thus we have a linear mapping $((\bR^d)^*)^{\otimes n}\rightarrow (T((\bR^d)))^*$ by the relation
$$
e^*_I(\ba)  = e^*_I(\mathfrak{p}_n(\ba)) = a_{i_1,\dots,i_n},
$$
which is the coefficient in front of the basis vector $e_I$ in $\ba$. In this way we get a linear mapping \cite{LCL2007}
$$
T((\bR^d)^*) = \bigoplus_{n=0}^\infty ((\bR^d)^*)^{\otimes n} \rightarrow (T((\bR^d)))^*.
$$
Thus also, for a path $X$ and its signature $\bX_{s,t}$, by linearity
$$
e^*_I(\bX) = \underset{s<s_1<\cdots<s_n<t}{\int\cdots\int}e_{i_1}^*(dX_{s_1})\otimes\cdots\otimes e^*_{i_n}(dX_{s_n}).
$$
\begin{definition}
Let $(\Omega,\cF,P)$ be a probability space, $(X_t)_{t\in [0,T]}$ an $\bR^d$-valued stochastic process and $\bX$ its signature. If $\E(\bX)<\infty$, then it is the expected signature of $X$. 
\end{definition}

The following is a very useful result \cite{LL2016}, which we rely on in our algorithm.

\begin{theorem}
\label{th:functionals}
Let $p\geq 1$, let $K\subset S(\cV^p([0,T],\bR^d))$ be compact and let $f:K\rightarrow \bR$ be continuous. Then for every $\varepsilon>0$ there exists a linear functional $\bL\in (T((\bR^d)))^*$ such that for all $\ba\in K$ we have 
$$
|f(\ba)-\bL\ba|\leq\varepsilon.
$$
\end{theorem}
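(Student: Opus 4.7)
The plan is to apply the Stone--Weierstrass theorem on the compact Hausdorff space $K$ to the family
$$
\mathcal{A} = \{\ba\mapsto \bL\ba : \bL\in (T((\bR^d)))^*\}\subset C(K,\bR).
$$
Once I verify that $\mathcal{A}$ is a subalgebra of $C(K,\bR)$ that contains the constants and separates points, Stone--Weierstrass gives that $\mathcal{A}$ is uniformly dense in $C(K,\bR)$, which is precisely the conclusion: for any continuous $f:K\to \bR$ and any $\varepsilon>0$ there exists $\bL\in (T((\bR^d)))^*$ with $\sup_{\ba\in K}|f(\ba)-\bL\ba|\leq \varepsilon$.

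Linearity of $\mathcal{A}$ is immediate from linearity of the dual space. The substantive step is closure under pointwise multiplication, and this is where the shuffle product enters: for any $\bL_1,\bL_2 \in (T((\bR^d)))^*$ there exists a third functional $\bL_3 \in (T((\bR^d)))^*$, namely their shuffle product, such that $(\bL_1 \ba)(\bL_2\ba)=\bL_3\ba$ for every signature $\ba$. This shuffle identity is one of the fundamental algebraic properties of iterated integrals, proved by repeated integration by parts, and can be imported from \cite{LCL2007}. Under the identification $(T((\bR^d)))^*\cong T((\bR^d)^*)$ recalled just before the theorem, the shuffle product is constructed inductively on words.

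The remaining two Stone--Weierstrass hypotheses are essentially free. Constants lie in $\mathcal{A}$: the functional $\ba\mapsto c\,a_0$ belongs to $(T((\bR^d)))^*$ and equals $c$ on all of $K$ since every signature has zeroth component equal to $1$. Point separation is automatic because $K$ consists of signatures rather than of paths: if $\ba\neq \ba'$ in $K$, they must differ in some tensor coordinate $a_I$, and the corresponding dual basis functional $e_I^*$ separates them; no appeal to Corollary \ref{cor:signature uniqueness} is needed at this stage, which would only become relevant if one wished to recast the result as a statement about paths.

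The main obstacle is really only the shuffle identity, which is standard but not set up in the excerpt and so must be quoted from the literature; in particular one has to be careful that the shuffle of two elements of the algebraic dual $(T((\bR^d)))^*$ again lies in $(T((\bR^d)))^*$, which follows from the fact that each homogeneous tensor level contributes to only finitely many levels of the product. Beyond that, one only needs to note that $K$ is compact Hausdorff as a compact subset of the metric space $T((\bR^d))$, so that Stone--Weierstrass is directly applicable; the rest of the argument is bookkeeping.
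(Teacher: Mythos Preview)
The paper does not actually prove this theorem: it is stated as a quotation from the literature, with the sentence ``The following is a very useful result \cite{LL2016}'' and no argument given. Your Stone--Weierstrass proof via the shuffle identity is precisely the standard argument used in that reference and elsewhere, and it is correct; so there is nothing to compare beyond noting that you have supplied what the paper merely cites.
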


While we do not go into details on the choice of a norm $\|\cdot\|$ on $T((\bR^d))$ (see \cite{LCL2007} for instance), we define, for functionals $f\in (T((\bR^d)))^*$,
$$
\|f\|_{Lip,1}=\sup_{\ba\neq\bb}\frac{|f(\ba)-f(\bb)|}{\|\ba-\bb\|}.
$$


\section{The Sig-Wasserstein-GAN predictor}

In the following we provide a mathematical framework for our approximation method. First we give precise meaning and forms to the NNs in the GAN used in our model. Then we outline the use of the Wasserstein distance on the signature space \cite{marc-sig, ni2020conditional} using expected signatures \cite{CO2018}.

\subsection{Well-posedness of the estimator}

\label{subsec: estimator}

We build the estimator as the composition of two neural differential equations (NDEs) \cite{kidger2022neural}. The first NDE encodes the information carried by the filtration $(\mathcal F_r^Y)_{0\leq r\leq s}$, whilst the second NDE is carefully designed so that its vector field parametrizes the rate of change of the mean prospective transition, see Lemma \ref{lm:diffeomorphisms} below.

\textbf{Estimator equations.} Let $\theta = (\theta_{1,0}, \theta_1, \theta_{2,0}, \theta_2)\in \Theta:=\mathbb R^{p_{1,0} + p_1 + p_{2,0} + p_2}$, for some $p_{1,0},p_1,p_{2,0},p_2\in\bN$ which we refer to as (learning) parameter and parameter space respectively.
Fix $k\in\bN$ and let $\bK:=\bR^k$ and $\bZ:=\bR^{d}$ be latent and sampling space respectively\footnote{In some applications it is useful to choose $k$ larger than $d$, as often a higher dimensional latent space proves to be more efficient in approximations.}. Let $z\sim \cN(0,I)$ be a standard Gaussian $\bZ$-valued random variable with distribution $\mu_Z$ and density $k(z)$. For each $\theta\in \Theta$ let $H_{\theta_{1,0}}:\bR^{d'}\rightarrow\bK$, $H_{\theta_{2,0}}:\bK\times \bZ\rightarrow \bR^{d}$, $G_{\theta_1}:\bR\times \bK\rightarrow L(\bR^{d'},\bK)$ and $G_{\theta_2}:\bR\times \bR^{d}\rightarrow \bR^{d}$ be continuous functions in all its variables. Let $(X,Y)_{t\in [0,T]}$ be the solution of \eqref{equ SDE}. Henceforth we fix an $s\in [0,T]$.
Consider the following generator equations, which will serve as an estimator for sample paths of $\E(X_t|\cF^Y_s)$, $t\in [s,T]$.
\begin{enumerate}
    \item For $ r\in[0,s]$, let $\tilde X_r$ be a $\mathcal F_r^Y$-adapted $\bK$-valued process given by the controlled differential equation (CDE)
\begin{equation}
\label{eq:gen1}
\begin{split}
    \tilde{X}_0 &=  H_{\theta_{1,0}}(Y_0)\\
    \tilde{X}_r &= \tilde{X}_0 + \int_0^r  G_{\theta_1}(u,\tilde{X}_u)\, dY_u.
\end{split}
\end{equation}
\item For $t\in [s,T]$, let with $z\sim\cN(0,I)$,
\begin{equation}
\label{eq:gen2}
\begin{split}
    X_s^z = & H_{\theta_{2,0}}(\tilde{X}_s,z), \\
    X_r^z = & X_s^z + \int_s^r G_{\theta_2}(u,X_s^z)\, du.
\end{split}
\end{equation}
\end{enumerate}

Equation \eqref{eq:gen1} is solved using the Log-ODE method \cite{morrill2021neural}, and $H_{\theta_{1,0}},  G_{\theta_1}$ are parametrized feed-forward NNs, with $\theta_{1,0}$ and $\theta_1$ denoting the learning parameters. Similarly, $H_{\theta_{2,0}}$ and $G_{\theta_2}$ are feed-forward neural nets, parametrized by $\theta_{2,0}$ and $\theta_2$ respectively, where equation \eqref{eq:gen2} can be solved by any ODE solver. The aim is to find $\theta\in\Theta$, such that for each $t\in [s,T]$  we have
\begin{equation}
\label{eq:sample}
\E(X_t|\cF^Y_s)\approx \frac{1}{N}\sum_{i=1}^N X^{z_i}_t,
\end{equation}
where $z_i, i=1,\dots,N$ are samples from the random variable $z$. More precisely, if the mappings $H_{\theta_{1,0}},H_{\theta_{2,0}},G_{\theta_1}$ and $G_{\theta_2}$ are such that for $t\in [s,T]$,
\begin{equation*}
    \E (X_t|\cF^Y_s) = \int_\bZ X^z_t \,\mu_Z(dz),
\end{equation*}
then \eqref{eq:sample} is an example of simple random sampling \cite{AG2007}. It is known that then, by the Law of Large Numbers (LLN), the right-hand side of \eqref{eq:sample} converges to $\E(\vp(X_t)|\cF^Y_s)$ almost surely as the sample size $N\rightarrow \infty$. 
In the remainder of this subsection we show that the model \eqref{eq:gen1}-\eqref{eq:gen2} is well posed. In other words, we argue that the approximation error of \eqref{eq:sample} can be made arbitrarily small by the right choice of $\theta\in\Theta$.

Recall the notation $\bar{X}$ for the time-augmented path $(t,X_t)_{t\in [0,T]}$. Let $m,k\in \bN$. The following result, Theorem  B.7 in \cite{KMFL2020}, shows that CDEs of the form \eqref{eq:gen1} are universal approximators in $\cV^1$.
\begin{lemma}
\label{lm:universal approximator}
Let a path $R\in\cV^1([0,T],\bR^k)$. Then for any $\varepsilon>0$ there exist continuous functions $f_0:\bR^d\rightarrow\bR^m$ and $f:\bR^m\rightarrow\bR^{m\times(d+1)}$, a linear map $l:\bR^m\rightarrow\bR^k$ and a path $X\in \cV^1([0,T],\bR^d)$ such that the unique solution of the CDE
$$
\tilde{R}_t = \tilde{R}_0+\int_0^t f(\tilde{R}_r)\,d\bar{X}_r,\quad \tilde{R}_0 = f_0(X_0),
$$
satisfies $\|R-l(\tilde{R})\|_{\cV^1}\leq \varepsilon.$
\end{lemma}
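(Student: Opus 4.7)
The plan is to exploit the full freedom granted by the lemma: we may choose the dimensions $d,m$, the driving path $X$, the continuous vector field $(f_0,f)$, and the linear readout $l$. With this flexibility the cleanest proof is to realize $R$ itself as the driving path and collapse $f$ to a constant, so that the CDE reduces to an ordinary integral that returns $R$ verbatim.

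\textbf{Construction.} I would set $d=m=k$, take $X=R\in \cV^1([0,T],\bR^k)$, choose $f_0=\mathrm{id}_{\bR^k}$ (so that $\tilde R_0=R_0$) and $l=\mathrm{id}_{\bR^k}$. For the vector field take the constant matrix
$$
f(y) \equiv [\, 0 \mid I_k \,] \in \bR^{k\times(k+1)},
$$
where the zero column corresponds to the time component of $d\bar X_r=(dr,dX_r)^\top$ and the identity block corresponds to $dX_r$. Being constant, $f$ is trivially continuous and Lipschitz, so the CDE admits a unique solution; direct integration gives $\tilde R_t = R_0 + \int_0^t dX_r = R_t$. Hence $l(\tilde R)=R$ and $\|R-l(\tilde R)\|_{\cV^1}=0\leq \varepsilon$.

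\textbf{Strengthening and main obstacle.} In this form the statement is essentially tautological. The way it is invoked in Section~\ref{subsec: estimator}, however, makes clear that the underlying Theorem B.7 of [KMFL2020] is actually stronger: $f_0$ and $f$ are required to lie in a parametric class, namely feed-forward neural networks, so that the resulting CDE is a genuine neural CDE. Upgrading the argument to that setting is the real work, and would proceed in three steps: (i) approximate $R$ in $\cV^1$ by a mollified or piecewise-linear path $\hat R$, taking care that both the sup- and BV-parts of the error tend to zero (this is subtle, since naive piecewise-linear interpolation controls only the sup-norm, and one must place the interpolation nodes so that the total variation of $R-\hat R$ also vanishes); (ii) replace the constant matrix $f$ and the identity $f_0$ by feed-forward nets that are $\varepsilon$-close in sup-norm on a compact set containing $\tilde R([0,T])$, via the classical universal approximation theorem; (iii) propagate both errors through the It\^o--Lyons continuity of the CDE solution map on $\cV^1$. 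Step (iii), the $\cV^1$-continuity of the solution map, is the main technical obstacle in the strengthened statement; in the present form it is not needed.
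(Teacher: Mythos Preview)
The paper does not prove this lemma at all: it is quoted verbatim as Theorem~B.7 of \cite{KMFL2020} and used as a black box, so there is no ``paper's own proof'' to compare against.

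Your construction is correct for the statement \emph{as written}. Since the lemma allows you to choose the driving path $X$ freely, taking $d=m=k$, $X=R$, $f_0=l=\mathrm{id}$ and the constant vector field $f\equiv[0\mid I_k]$ indeed forces $\tilde R=R$ identically, and the error is zero. You are also right that this makes the stated lemma essentially vacuous: the nontrivial content of the cited result is that (a) the driving path $X$ is \emph{given} (in the paper's application it is the observation $\hat Y$, not something one may set equal to $R$), and (b) $f_0,f$ are drawn from a neural-network class rather than arbitrary continuous functions. Your ``Strengthening'' paragraph correctly isolates these as the real hypotheses and sketches a reasonable three-step programme for that harder version; that programme is in the spirit of the proof in \cite{KMFL2020}, which the paper defers to. In short: nothing is wrong with your argument, and your diagnosis of why the lemma as transcribed here is weaker than what the paper actually needs is accurate.
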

It is known, \cite{RL2018}, that for some $\cF^Y_r$-predictable process $C$ we can write
\begin{equation}
\label{martingale rep}
\E(X_r|\cF^Y_r) = \E(X_0|\cF^Y_0) + \int_0^r C_u\,dY_u.
\end{equation}
Consider, for $\cD$, the processes $\widehat{\E(X_r|\cF^Y_r)}$, $r\in [0,s]$ and $\hat{Y}$.
Then Lemma \ref{lm:universal approximator} together with the martingale representation \eqref{martingale rep} ensures that \eqref{eq:gen1}, for the right choice of $\theta\in\Theta$ and with $\hat{Y}$ instead of $Y$, is an efficient and accurate estimator for $\widehat{\E(X_r|\cF^Y_r)}$, $r\in [0,s]$. 
A version of the universal approximator property, Lemma  \ref{lm:universal approximator}, can also be proven for sample paths of $(\E(X_t|\cF^Y_t))_{t\in [0,T]}$ with $Y$ as driver for the CDE.
A rigorous proof exceeds the scope of the present article and will be given in a follow-up paper.

The following result gives an explicit form to the mappings $H_{\theta_{2,0}}$ and $G_{\theta_2}$ in \eqref{eq:gen2}, where we suppress the dependence on $\theta_{2,0},\theta_2$ and $\tilde{X}$ for the reader's convenience. In other words, we consider model \eqref{eq:gen1}-\eqref{eq:gen2} for a fixed $\omega\in\Omega$. Recall that $z\sim \cN(0,1)$ is a standard Gaussian $\bZ=\bR^{d}$-valued random variable with distribution $\mu_Z$. In a continuation of the present article we will present a more general class of diffeomorphic generators.

\begin{lemma}
\label{lm:diffeomorphisms}
Assume the transition probability $p(t,x,y;s,x',y')$ from Theorem \ref{th:prediction} is continuously differentiable in time for all $s,t\in [0,T]$ and $(x,y),(x',y')\in \bR^{d+d'}$. Fix $\omega\in\Omega$. \newline
(i) There exists a diffeomorphism $H:\bZ\rightarrow \bR^{d}$ such that $(\mu\circ H^{-1}) = \pi_s$. Then
\begin{equation}
\label{eq:filter}
    \E(X_s|\cF^Y_s) = \int_{\bZ} H(z)\,\mu_Z(dz).
\end{equation}
(ii) Moreover, the mean prospective transition
\begin{equation}
\label{eq:generator 2}
G(t,x'):=\frac{\partial}{\partial t}\int_{\bR^{d}}x\int_{\bR^{d'}}p(t,x,y;s,x',Y_s)\,dy\,dx
\end{equation}
satisfies
\begin{equation}
\label{eq:predictor}
\E(X_t|\cF^Y_s)=\int_{\bZ}H(z)\,\mu(dz) +  \int_{\bZ}\int_s^tG(r,H(z))\,dr\,\mu_Z(dz).
\end{equation}
\end{lemma}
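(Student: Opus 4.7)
The plan is to first construct the required diffeomorphism via optimal transport for part (i), and then in part (ii) to unroll the prediction density representation of Theorem \ref{th:prediction}, apply Fubini and the fundamental theorem of calculus in time, and finally change variables using the map from part (i).

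For part (i), I would observe that both $\mu_Z$ (the standard Gaussian on $\bZ=\bR^d$) and the filtering density $\pi_s\,dx$ are Borel probability measures on $\bR^d$ absolutely continuous with respect to Lebesgue. By Brenier's theorem \cite{V2009}, there is a Monge map $H=\nabla\varphi$ with $\varphi$ convex pushing $\mu_Z$ onto $\pi_s\,dx$. To upgrade $H$ to a diffeomorphism I would appeal to Caffarelli's regularity theory, which requires that $\pi_s$ be smooth and bounded away from zero; these properties can be supplied by imposing additional hypotheses on the coefficients of \eqref{equ SDE} (e.g.\ uniform ellipticity or a Hörmander condition). Identity \eqref{eq:filter} then follows from the change of variables formula applied to the identity function $x\mapsto x$:
\[
\E(X_s|\cF_s^Y) \;=\; \int_{\bR^d} x\,\pi_s(x)\,dx \;=\; \int_{\bZ} H(z)\,\mu_Z(dz).
\]

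For part (ii), I begin from Theorem \ref{th:prediction}:
\[
\E(X_t|\cF_s^Y) \;=\; \int_{\bR^d} x\,\pi_{t,s}(x)\,dx \;=\; \int_{\bR^d}\!\int_{\bR^d}\!\int_{\bR^{d'}} x\, p(t,x,y;s,x',Y_s)\,\pi_s(x')\,dy\,dx\,dx'.
\]
Fubini, justified by the polynomial growth controls implicit in Assumption \ref{assumption SDE}, rewrites this as $\int_{\bR^d} F(t,x')\pi_s(x')\,dx'$, where
\[
F(t,x') \;:=\; \int_{\bR^d} x \int_{\bR^{d'}} p(t,x,y;s,x',Y_s)\,dy\,dx.
\]
Two observations close the argument: (a) at $t=s$, $p(s,\cdot;s,x',Y_s)$ is a point mass at $(x',Y_s)$, so $F(s,x')=x'$; (b) the $C^1$-in-$t$ hypothesis on $p$, together with a dominated-convergence argument justifying the interchange of $\partial_t$ with the double integral, yields $\partial_t F(t,x')=G(t,x')$. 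Hence $F(t,x')=x'+\int_s^t G(r,x')\,dr$, and substituting back and applying the change of variables from part (i) to each of the two resulting terms delivers \eqref{eq:predictor}.

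The main obstacle is not the algebraic manipulation but the regularity needed to make each step rigorous. The hard part is producing $H$ as a genuine \emph{diffeomorphism} rather than a merely measurable transport map, since smoothness and strict positivity of $\pi_s$ are not provided by Theorem \ref{theorem filtering density} alone; these have to be imported from additional smoothness/ellipticity on the coefficients of \eqref{equ SDE}. The interchange of $\partial_t$ with the $(x,y)$-integration in $F$ similarly requires a dominating function for $\partial_t p$, plausibly available under the same strengthened assumptions but requiring explicit care.
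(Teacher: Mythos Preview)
Your proposal is correct and follows essentially the same route as the paper: existence of the transport diffeomorphism via \cite{V2009}, change of variables for \eqref{eq:filter}, and for \eqref{eq:predictor} the fundamental theorem of calculus applied to the inner $(x,y)$-integral of $p$ followed by the same change of variables. The only differences are cosmetic---the paper verifies \eqref{eq:predictor} by computing the two terms on its right-hand side separately rather than introducing your intermediate function $F$, and it does not spell out the Brenier/Caffarelli route or the dominated-convergence justification you (rightly) flag as needing care.
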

\begin{proof}
Let $A$ and $B$ denote the first and second term on the right-hand side of \eqref{eq:predictor} respectively. It is known, see \cite{V2009}, that since $\mu_Z$ and $\pi_s$ are absolutely continuous with respect to the Lebesgue measure,
 there is a diffeomorphism $H:\bZ\rightarrow\bR^{d}$ such that $(\mu_z\circ H^{-1})(dx) = P(X_s\in dx|\cF^Y_s)$, or alternatively, \cite[Ch. 1]{V2009},
$$
k(z) = \pi_s(H(z))|\det DH(z)|.
$$
Then, a calculation using change of variables yields, for $\vp\in C_0^\infty(\bR^d)$,
$$
A=\int_{\bZ}\vp(H(z)) k(z)\,dz 
$$
$$
= \int_{\bZ}\vp(H(z)) \pi_s(H(z))|\det DH(z)|\, dz
$$
$$
 = \int_{\bR^{d}}\vp(x) \pi_s(x)\,dx = \E(\vp(X_s)|\cF^Y_s).
$$
This proves \eqref{eq:filter}.
For the second term, we compute
$$
B=\int_{\bZ}\int_s^t G(r,H(z))k(z)\,dr\,dz
$$
$$
=\int_{\bZ}\int_s^t G(r,H(z))\pi_s(H(z))|\det DH(z)|\,dr\,dz
$$
$$
 = \int_{\bR^{d}}\int_s^t G(r,x')\pi_s(x')\,dr\,dx'
 $$
 $$
 = \int_{\bR^{d}}x\pi_{t,s}(x)\,dx -  \int_{\bR^{d}}x\pi_{s,s}(x)\,dx = \E(X_t|\cF^Y_s) - \E(X_s|\cF^Y_s),
$$
where we used the form \eqref{eq:generator 2}. This finishes the proof.
\end{proof}
In applications the data corresponds to a collection of values
\begin{equation}
\label{eq:data}
\bD_{\cD,m}:=
\{(X_{t_i},Y_{t_i})(\omega_j),t_i\in \cD,  \omega_j\in\Omega, j=1,\dots,m\},
\end{equation}
where $|\cD|=n$, for integers $n,m\in\bN$. The estimator \eqref{eq:gen1}-\eqref{eq:gen2} is then  applied to the interpolated paths $\{(\hat{X}(\omega_j),\hat{Y}(\omega_j)),j=1,\dots,m\}$. Depending on the context and required regularity, different interpolation techniques can be used \cite{KMFL2020,marc-sig}. In the context of our well-posedness results, note that (i) always $(\hat{X},\hat{Y})\in \cV^1$ and that (ii), we only require a finite collection of diffeomorphisms $H(\omega_j,\cdot)$, which is sure to exist. This motivates the following.
\begin{claim}
For each $\varepsilon>0$ and $\delta>0$ there exists
an equidistant partition $\cD$ of size $m$, a data set $\bD_{{\cD,m}}$ \eqref{eq:data} and   mappings $H_{\theta_{1,0}}, G_{\theta_1}, H_{\theta_{2,0}}, G_{\theta_2}$ such that
\begin{equation}
\label{eq:convergence}
P\left(\big|\E(X_t|\cF^Y_s)-\tfrac{1}{N}\sum_{i-1}^N X^{z_i}_t\big|\geq\delta\right)\leq\varepsilon.
\end{equation}
\end{claim}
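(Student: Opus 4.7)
The plan is to bound the event inside $P(\cdot)$ by a triangle-inequality decomposition into three pieces: a modelling error from approximating the exact CDE/ODE representation of the conditional expectation by neural vector fields, a discretization error from replacing $(X,Y)$ by the linearly interpolated paths $(\hat X,\hat Y)$ on the grid $\cD$, and a Monte Carlo error from replacing the integral $\int_\bZ X^z_t\,\mu_Z(dz)$ by the empirical mean over $N$ samples. The goal is to make each piece at most $\delta/3$ on an event of probability at least $1-\varepsilon/3$, and close the estimate by a union bound.

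For the modelling piece, I would invoke Lemma \ref{lm:diffeomorphisms}, applied pathwise in $\omega$, to rewrite $\E(X_t|\cF^Y_s)$ through the diffeomorphism $H$ and drift $G$ supplied by \eqref{eq:predictor}. For the encoder \eqref{eq:gen1}, I would use the representation $\E(X_r|\cF^Y_r)=\E(X_0|\cF^Y_0)+\int_0^r C_u\,dY_u$ stated right before Lemma \ref{lm:universal approximator}, to which the universal approximation Lemma \ref{lm:universal approximator} is directly applicable on the interpolated driving path $\hat Y$; this furnishes continuous $h_{\theta_{1,0}}$ and $G_{\theta_1}$ making $\tilde X_s$ uniformly close to the discretized filter summary. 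For the decoder, since $\bD_{\cD,m}$ contains only finitely many realizations and the encoded outputs $\{\tilde X_s(\omega_j)\}_{j\leq m}$ live in a bounded set, I would apply the classical universal approximation theorem for feed-forward networks on a compact set to build $H_{\theta_{2,0}}$ and $G_{\theta_2}$ approximating $H(\omega_j,\cdot)$ and $G(\omega_j,\cdot,\cdot)$ uniformly; continuity of the transition density $p$ in its arguments and of $H$ ensure that this uniform approximation is possible on the relevant compact.

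The discretization error is handled by choosing $\cD$ equidistant with sufficiently fine mesh: continuity of sample paths of $(X,Y)$, Assumption \ref{assumption SDE}, and $L^2$-stability of the filtering density with respect to the observation path give that the discretized prediction is within $\delta/3$ of the true one with probability at least $1-\varepsilon/3$. For the Monte Carlo term, I would apply Chebyshev's inequality for conditionally i.i.d.\ samples: for fixed path realization and fixed networks the $X^{z_i}_t$ are i.i.d.\ in $z_i$ with bounded second moment (a consequence of compactness of the relevant image and continuity of $H_{\theta_{2,0}},G_{\theta_2}$), so picking $N$ large enough makes the empirical mean within $\delta/3$ of $\int_\bZ X^z_t\,\mu_Z(dz)$ with probability at least $1-\varepsilon/3$, conditional on the observation. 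Integrating out yields the unconditional bound.

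The main obstacle is the measurability and uniformity in $\omega$: Lemma \ref{lm:diffeomorphisms} only furnishes $H$ and $G$ pathwise, and these objects must be recovered through a single pair $(H_{\theta_{2,0}},G_{\theta_2})$ that acts on the encoded state $\tilde X_s$. By Corollary \ref{cor:signature uniqueness} the interpolated observation is determined by its encoding, so the $\omega$-dependence of $H(\omega,\cdot)$ can be routed through $\tilde X_s(\omega)$; restricting to the finite data set $\bD_{\cD,m}$ reduces the matter to a finite interpolation problem well suited to feed-forward networks. Orchestrating the three errors so that the upstream approximation choices do not inflate the downstream Monte Carlo variance, and quantifying the compactness of the image sets uniformly over the sample, is the delicate bookkeeping that the authors defer to the follow-up paper.
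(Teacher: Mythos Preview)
Your three-term decomposition (modelling, discretization, Monte Carlo) is a reasonable scaffold, but it takes a genuinely different route from the paper's sketch. The paper does not split the error analytically; instead it \emph{localizes in $\Omega$}. It first truncates to a high-probability set $\Omega^\varepsilon=[\sup_t|X_t|<K_\varepsilon]$ via uniform integrability, then partitions $\Omega^\varepsilon$ into finitely many cells $B_\bJ$ determined by which bins the values $\E(X_{r_i}|\cF^Y_{r_i})$, $r_i\in\cD$, fall into. On each cell the filter trajectory is confined to a narrow tube, so a \emph{single} choice of CDE from Lemma~\ref{lm:universal approximator} works for every $\omega\in B_\bJ$. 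Patching over the cells gives the bound on all of $\Omega^\varepsilon$, hence \eqref{eq:convergence}. Your approach buys modularity (each error source is isolated and can be estimated with standard tools); the paper's buys direct control of the probability over $\Omega$ without an explicit generalization argument.

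The point where your sketch is thinner than the paper's is precisely the step the binning is designed for. You write that ``restricting to the finite data set $\bD_{\cD,m}$ reduces the matter to a finite interpolation problem,'' and build $H_{\theta_{2,0}},G_{\theta_2}$ to match $H(\omega_j,\cdot),G(\omega_j,\cdot,\cdot)$ for $j\leq m$. But \eqref{eq:convergence} is a probability over all of $\Omega$: the encoder in \eqref{eq:gen1} is driven by $Y(\omega)$ for arbitrary $\omega$, not only by the $m$ stored realizations. Matching on finitely many $\omega_j$ does not by itself control $P(\cdot)$, and routing the $\omega$-dependence through $\tilde X_s$ via Corollary~\ref{cor:signature uniqueness} only tells you the encoding is injective on $C_\cD$, not that your decoder behaves well on encodings outside the training set. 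The paper's partition into tubes $B_\bJ$ is exactly what converts a finite collection of approximations into a statement holding on a set of probability $\geq 1-\varepsilon$; in your framework you would need an additional covering or stability argument to play that role. You also omit the tail truncation $K_\varepsilon$, which is what supplies the compactness you invoke informally (``compactness of the relevant image'').
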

\begin{proof}
The proof of this result is quite technical and will be the subject of a subsequent paper. Here we only provide a rough sketch. For simplicity of exposition we set $d=d'=k=1$ and only present the case $t=s$, corresponding to the filtering problem.  First observe that $\sup_{t\in [0,T]}|X_t|$ is uniformly integrable, meaning that for each $\varepsilon$ there exists a $K_\varepsilon$ such that $P(\sup_{t\in [0,T]}|X_t|\geq K_\varepsilon)\leq \varepsilon$. Next we partition the set $\Omega^\varepsilon:=[\sup_{t\in [0,T]}|X_t|<K_\varepsilon]$ of probability $1-\varepsilon$. For $r_i\in \cD\cap [0,s]$, $i=1,\dots,n'$ and numbers $k_j=\tfrac{K_\varepsilon}{M}j$, $j=0,\dots,M$, assume without loss of generality that $X_t\geq 0$, $t\in [0,s]$, and define sets of the form
$$
A^{i}_j:=\{\omega\in\Omega^\varepsilon: \E(X_{r_i}|\cF^Y_{r_i})\in [k_j,k_{j+1}]\},
$$
$$
B_{\bJ}:=\bigcap_{i=1,\dots,n';j_i\in\{0,\dots,N\}}A^i_{k_{j_i}}.
$$
The set $B_{\bJ}$ prescribes a certain range of paths for $\E(X_r|\cF^Y_r)(\omega)$, when $\omega\in B_\bJ$. We show that due to the integrability and continuity of $(X,Y)$ and the universal approximation property of Lemma \ref{lm:universal approximator} we can obtain \eqref{eq:convergence} on $B_\bJ$. Repeating this procedure yields the result for $t=s$.
\end{proof}

Recall the notation for $C_\cD$. If $\cD$ consists of $n$ points, then the identification $C_\cD\ni f \longleftrightarrow (f_{t_i})_{t_i\in \cD}\in \bR^{d\times n}$ induces a $\sigma$-algebra on $C_\cD$ in a natural way. Then, as for each $\omega\in\Omega$, $\hat{X}(\omega)\in C_\cD$, by an abuse of notation we can equivalently regard $\hat{X}$ as $C_\cD$-valued random variable with a distribution $\mu^{\hat{X}}$ on $C_\cD$. Further, by 
 Corollary \ref{cor:signature uniqueness}, we have a bijection of the signature map
$$
S:C_\cD\rightarrow S(C_\cD),
$$
enabling us to consider the push-forward measure $\mu^{\hat{X}}\circ S^{-1}$ on $S(C_\cD)$. Let $\cP(S(C_\cD))$ denote the set of all such push-forward measures.
The following result, Proposition 1 and Corollary 3.3 in \cite{CO2018}, is fundamental to our methods.

\begin{lemma}
Consider two continuous $\bR^d$-valued processes $(X_t)_{t\in [0,T]}$ and $(Y_t)_{t \in [0,T]}$ such that $\sup_{t\in [0,T]}\E(|X_t|+|Y_t|)<\infty$. Denote by $\mu^{\hat{X}},\mu^{\hat{Y}}\in \cP(C_\cD)$ the push-forward measures constructed above and denote by $\E_{\mu^{\hat{X}}},\E_{\mu^{\hat{Y}}}$ the expectation under $\mu^{\hat{X}},\mu^{\hat{Y}}$ respectively. Then 
\begin{equation}
\label{eq:measure equation}
\E_{\mu^{\hat{X}}}(S)
=\E_{\mu^{\hat{Y}}}(S)\quad\text{iff} \quad\mu^{\hat{X}}=\mu^{\hat{Y}}.
\end{equation}
Then naturally $\hat{X}=\hat{Y}$ in distribution.
\end{lemma}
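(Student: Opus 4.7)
The reverse implication in \eqref{eq:measure equation} is immediate from linearity of expectation; only the forward direction requires work. My plan is to combine the universal approximation property of signature-linear functionals in Theorem~\ref{th:functionals} with a tightness argument, thereby passing from equality of expected signatures to equality of integrals against all bounded continuous test functions.

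First I would transfer the problem to the signature space. By Corollary~\ref{cor:signature uniqueness} the map $S$ restricted to $C_\cD$ is a bijection onto $S(C_\cD)$, so $\mu^{\hat X}=\mu^{\hat Y}$ is equivalent to equality of the push-forward measures $\nu^X := \mu^{\hat X}\circ S^{-1}$ and $\nu^Y := \mu^{\hat Y}\circ S^{-1}$ on $S(C_\cD)$. Applying any linear functional $\bL\in(T((\bR^d)))^*$ to the hypothesis $\E_{\mu^{\hat X}}(S) = \E_{\mu^{\hat Y}}(S)$, and using linearity together with the change-of-variables formula, yields
$$
\int \bL(s)\,\nu^X(ds) = \int \bL(s)\,\nu^Y(ds),
$$
provided both integrals are finite; this is guaranteed by the moment hypothesis, since paths in $C_\cD$ are piecewise linear with finitely many knots and their signatures are therefore polynomial in the knot values.

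Given a bounded continuous $f$ on $S(C_\cD)$ and $\varepsilon>0$, I would select a compact $K\subset S(C_\cD)$ with $\nu^X(K^c),\nu^Y(K^c)<\varepsilon$, which exists because each individual probability measure on the finite-dimensional Polish space $C_\cD$ is tight and tightness is preserved under the continuous signature map. I would then invoke Theorem~\ref{th:functionals} on $K$ to produce $\bL\in(T((\bR^d)))^*$ with $\sup_K|f-\bL|\leq\varepsilon$, and combine the uniform estimate on $K$ with the functional identity above and the tail bound on $K^c$ to obtain $|\int f\,d\nu^X - \int f\,d\nu^Y| \leq C(\|f\|_\infty+1)\varepsilon$. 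Sending $\varepsilon\downarrow 0$ then gives $\nu^X=\nu^Y$ by the standard duality between Borel probability measures on a Polish space and bounded continuous functions, and hence $\mu^{\hat X}=\mu^{\hat Y}$, from which $\hat{X}=\hat{Y}$ in distribution follows.

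The main obstacle I anticipate is tail control: the approximating functional $\bL$ is generally unbounded on all of $S(C_\cD)$, so the bound $|f-\bL|\leq\varepsilon$ on $K$ does not translate directly into a uniform estimate on $K^c$, and the naive computation cannot absorb a large contribution from the complement. I expect to handle this either by first establishing the result for the truncated signature in $T^N(\bR^d)$ and then letting $N\to\infty$, exploiting the factorial decay of iterated integrals of paths in $C_\cD$, or equivalently by pre-multiplying $f$ with a smooth cutoff supported near $K$, so that only the restriction of $\bL$ to $K$ contributes to the integral error.
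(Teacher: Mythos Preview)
The paper does not supply its own proof of this lemma; it simply attributes the result to Chevyrev--Oberhauser \cite{CO2018} (their Proposition~1 and Corollary~3.3). So there is no in-paper argument to compare against, and your proposal should be judged on its own merits.

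Your outline is the natural Stone--Weierstrass strategy, but the obstacle you flag at the end is not a technicality---it is the heart of the matter, and neither of your suggested fixes removes it. The cutoff idea fails for exactly the reason you already wrote down: after replacing $f$ by $f\chi$ supported in $K$ and approximating by $\bL$ on $K$, you still need $\int_K \bL\,d(\nu^X-\nu^Y)$ small, and the only identity available is the global one $\int \bL\,d(\nu^X-\nu^Y)=0$, which transfers the burden to $\int_{K^c}\bL\,d(\nu^X-\nu^Y)$ with $\bL$ unbounded there. Passing to $T^N(\bR^d)$ does not help either: on the finite-dimensional space $C_\cD\cong\bR^{d|\cD|}$ the signature components are polynomials in the knot values, so your argument reduces to the classical moment problem, and equality of all polynomial moments does \emph{not} determine a probability measure without a determinacy condition (Carleman, exponential moments, or compact support). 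The stated hypothesis $\sup_t\E(|X_t|+|Y_t|)<\infty$ is only a first-moment bound and does not even guarantee that higher signature levels are integrable, so your claim that $\int\bL\,d\nu^X$ is finite for every $\bL$ is also unjustified as written.

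The device used in \cite{CO2018} is different in kind: one composes the signature with a tensor normalisation so that the resulting features are bounded, whence linear functionals of the normalised signature are genuinely bounded continuous functions on path space and the Stone--Weierstrass argument goes through without any tail estimate; alternatively one assumes compact support, in which case $K$ can be taken to carry full mass and your difficulty disappears. If you want a self-contained proof in the present piecewise-linear setting, the cleanest route is to impose a condition (exponential moments of the knots, or compact support of $\mu^{\hat X},\mu^{\hat Y}$) under which the Hamburger moment problem on $\bR^{d|\cD|}$ is determinate, and then invoke that directly.
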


As we want to learn the conditional law $P(X_t\in dx|\cF^Y_s)$ for the solution $(X,Y)$ of our diffusion system \eqref{equ SDE} based on a finite set of data points at times in $\cD$, we need a metric on $\cP(S(C_\cD))$ which makes use of \eqref{eq:measure equation}. Making use of Theorem \ref{th:functionals} allows us to approximate the usual Wasserstein metric by the Sig-$W_1$ metric
\begin{equation}
\label{eq:wasserstein sig}
\begin{split}
W_1(\mu,\nu) &= \sup_{\|f\|_{Lip,1}\leq 1}\E_\mu(f(S))-\E_\nu(f(S))\\
&\approx \sup_{\|\bL\|_{Lip,1}\leq 1, \bL\text{ is linear}}\E_\mu(\bL S)-\E_\nu(\bL S)\\
&:= \text{Sig-}W_1(\mu,\nu)
\end{split}
\end{equation}
where $\mu,\nu\in\cP(S(C_\cD))$. In a very useful way, if $\mu,\nu\in\cP(S(C_\cD))$ have compact support we get\footnote{Equation \eqref{eq:wasserstein sig2} provides an explicit form of the supremum in \eqref{eq:wasserstein sig}. This is not the case in similar settings, where $f$ is parametrized by an NN, and a min-max problem is numerically solved by alternating gradient descent and gradient ascent algorithms, with additional constraints to ensure Lipschitzness. It is well known that first order gradient descent/ascent might not converge even in the convex-concave case~\cite{daskalakis2018limit}.}
\begin{equation}\label{eq:wasserstein sig2}
\begin{split}
\text{Sig-}W_1(\mu,\nu)= \|\E_\mu(S)-\E_\nu(S)\|_2,
\end{split}
\end{equation}
where the subscript $2$ denotes the $L_2$-norm on the signature space. In practice the truncated signature is used, which results in the use of the Euclidian norm.
 For a detailed derivation see the recent work \cite{marc-sig}, \cite{ni2020conditional}, where this is first introduced.
This justifies the use of signatures in conditional generative adversarial networks, resulting in the \textit{Conditional Sig-Wasserstein GAN} (CSigWGAN).

\subsection{Implementation}

\textbf{Training.} Consider again the model \eqref{eq:gen1}-\eqref{eq:gen2} and the setup in subsection \ref{subsec: estimator}. To train the estimator we generate data $\bD_{\cD,m}$ as in \eqref{eq:data}. Considering $\widehat{\E(X|\cF^Y)}$ as random variable on the space $C_\cD$, we obtain an approximative measure $\mu_{\bD}$, which for each fixed time $t>s$ approximates the prediction measure $P(X_t\in dx|\cF^Y_s)$. Then, using the Sig-$W_1$- metric in \eqref{eq:wasserstein sig}, we train the neural nets in the estimator \eqref{eq:gen1}-\eqref{eq:gen2} so that  
\begin{equation}\label{eq:wasserstein1}
    \theta^* = \argmin_{\theta}\,\, \mathbb E(W_1(\mu_{\bD}, \nu_{\theta})),
\end{equation}
where $\nu_\theta$ is the distribution of the approximated conditional expectation in \eqref{eq:sample}, i.e. our estimator. This is outlined in Algorithm \ref{alg:CSigWGAN}.

\begin{algorithm}
\caption{Training and evaluation of CSigWGAN}
\label{alg:CSigWGAN}
\begin{algorithmic}
\STATE{\textbf{Input}: i) Time discretisation $\cD:=\{0=t_0<\ldots<t_N=T\}$ and fixed $s\leq t\in\cD$, \newline
ii) Training dataset $\bD_{\cD,m} := \{(X_{t_i}, Y_{t_i})(\omega_j), t_i\in \cD; j=1,\ldots,m\}$}
\STATE{\textbf{Notation}: $\mathbb E^{\bD}$ denotes the empirical expectation calculated on the dataset $\bD$, }
\STATE{ \textbf{Training}:
\begin{enumerate}
    \item Approximate the conditional expectation
    $\mathbb E(\mathbf{X}_{s,t}^N | \mathcal F_s^Y)$
     under the data measure by the $L_2$-orthogonal projection of $\mathbf{X}_{s,t}^N$ on the space of  $\mathcal F_s^Y$-measurable r.v., by leveraging Doob-Dynkin lemma and Theorem \ref{th:functionals},
    \[
    \begin{split}
    \hat{L} := \argmin_{L\text{ is linear}} \mathbb E^{\bD} \left[(\mathbf{X}_{s,t}^N - L(\mathbf{Y}_{0,s}^N))^2\right], \\
    \mathbb E(\mathbf{X}_{s,t}^N | \mathcal F_s^Y)(\omega) \approx 
    \hat{L}(\mathbf{Y}_{0,s}^N(\omega)).
    \end{split}
    \]
    \item Use Stochastic Gradient Descent to minimise \eqref{eq:wasserstein sig2},
    \[
    \theta^* = \argmin_{\theta} \mathbb E^{\bD} \left[ \lVert \hat{L}(\mathbf{Y}_{0,s}^N) - \mathbb E_{\nu_{\theta}}\left[\mathbf X_{s,t}^N\right] \rVert_2 \right],
    \]
    where $\mathbb E_{\nu_{\theta}}\left[\mathbf X_{s,t}^N\right]$ can be estimated using Monte Carlo by drawing samples from the generator using different values of $z$ in the generators \eqref{eq:gen1}-\eqref{eq:gen2}.
\end{enumerate} }
\RETURN $\theta^{*}$.
\end{algorithmic}
\end{algorithm}

\section{Numerical results}
\begin{figure*}[!ht]
	\centering
		\includegraphics[width=\textwidth]{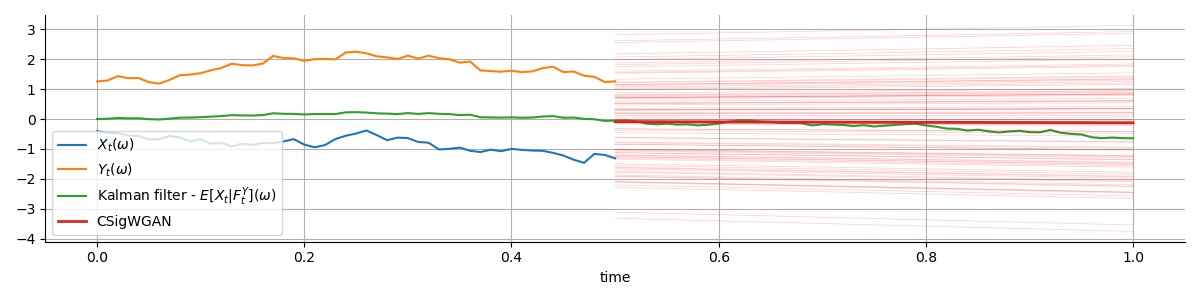}
		\caption{CSigWGAN compared to the Kalman filter: The average over the sample paths generated by the CSigWGAN (bold red line) remains very close to the Kalman filter (green) for $t>s$, and matches the Kalman filter at $t=s$. The light red lines are individual sample paths, sampled from the conditional prediction law.}
\label{fig:sigfiltering}
\end{figure*}

We consider the linear SDE with $X_t, Y_t\in\mathbb R$, $t\in [0,1]$
\begin{equation}\label{eq:filtering experiment}
\begin{split}
    dX_t = & 0.1(1+t)X_t dt + dV_t, \quad X_0\sim\cN(0,1) \\
    dY_t = & 0.2 X_t dt + dW_t, \quad \quad \quad Y_0\sim\cN(0,1),
\end{split}
\end{equation}
where $X_0$, $Y_0$, $V$ and $W$ are pairwise independent.
The generator~\eqref{eq:gen1}-\eqref{eq:gen2} is parametrized as follows: $h_{\theta_{1,0}}, H_{\theta_{2,0}}$ are feedforward NNs with one hidden layer with 20 neurons and \texttt{ReLU} activation function. The resulting process $\tilde X_r$ from equation~\eqref{eq:gen1} takes values in $\mathbb R^{10}$, and $X_r^z$ takes values in $\mathbb R$. The vector fields $G_{\theta_1}, G_{\theta_2}$ are parametrized by feedforward NNs with one hidden layer with 128 hidden neurons and \texttt{Tanh} activation function.

We use~\eqref{eq:filtering experiment} to create a synthetic dataset~\eqref{eq:data} with 20 000 samples. The algorithm is trained for a total of 50 epochs, where the NDEs are backpropagated using the adjoint method~\cite{chen2018neural}. 

Fig.~\ref{fig:sigfiltering} provides an example of the numerical approximation of $\mathbb E[X_t | \mathcal F_s^Y], t\geq s, s=0.5$ compared to the existing analytical solution of $\mathbb E[X_t | \mathcal F_t^Y]$ given by the Kalman filter. Each red line is a sample generated with~\eqref{eq:gen1}-\eqref{eq:gen2} for one sample of $z$. The conditional expectation, depicted in stronger red, is then approximated using the average of the samples according to \eqref{eq:sample}. For $t=s$, our estimator matches the Kalman filter. 

Prediction methods are used less frequently in the literature in comparison to filtering and are more challenging to implement. However, the Kalman filter, being the optimal estimate and using more information on the observation process $Y$, can be used well for validation purposes.

The implementation is available at \url{https://github.com/msabvid/SigFiltering}. 

\bibliographystyle{IEEEtran}
\bibliography{bibliography}

\begin{thebibliography}{10}
\providecommand{\url}[1]{#1}
\csname url@samestyle\endcsname
\providecommand{\newblock}{\relax}
\providecommand{\bibinfo}[2]{#2}
\providecommand{\BIBentrySTDinterwordspacing}{\spaceskip=0pt\relax}
\providecommand{\BIBentryALTinterwordstretchfactor}{4}
\providecommand{\BIBentryALTinterwordspacing}{\spaceskip=\fontdimen2\font plus
\BIBentryALTinterwordstretchfactor\fontdimen3\font minus
  \fontdimen4\font\relax}
\providecommand{\BIBforeignlanguage}[2]{{%
\expandafter\ifx\csname l@#1\endcsname\relax
\typeout{** WARNING: IEEEtran.bst: No hyphenation pattern has been}%
\typeout{** loaded for the language `#1'. Using the pattern for}%
\typeout{** the default language instead.}%
\else
\language=\csname l@#1\endcsname
\fi
#2}}
\providecommand{\BIBdecl}{\relax}
\BIBdecl

\bibitem{C2014}
D.~Crisan, ``The stochastic filtering problem: a brief historical account,''
  \emph{Journal of Applied Probability}, vol.~51, no.~A, pp. 13--22, 2014.

\bibitem{BC2009}
A.~Bain and D.~Crisan, \emph{Fundamentals of stochastic filtering}.\hskip 1em
  plus 0.5em minus 0.4em\relax Springer, 2009, vol.~3.

\bibitem{CCC2014}
T.~Cass, M.~Clark, and D.~Crisan, ``The filtering equations revisited,'' in
  \emph{Stochastic Analysis and Applications 2014}.\hskip 1em plus 0.5em minus
  0.4em\relax Springer, 2014, pp. 129--162.

\bibitem{RL2018}
B.~L. Rozovsky and S.~V. Lototsky, \emph{Stochastic evolution systems: linear
  theory and applications to non-linear filtering}.\hskip 1em plus 0.5em minus
  0.4em\relax Springer, 2018, vol.~89.

\bibitem{Y1977}
M.~Yor, ``Sur les th{\'e}ories du filtrage et de la pr{\'e}diction,'' in
  \emph{S{\'e}minaire de Probabilit{\'e}s XI}.\hskip 1em plus 0.5em minus
  0.4em\relax Springer, 1977, pp. 257--297.

\bibitem{KR1978}
N.~V. Krylov and B.~Rozovskii, ``On conditional distributions of diffusion
  processes,'' \emph{Mathematics of the USSR-Izvestiya}, vol.~12, no.~2, p.
  336, 1978.

\bibitem{R1980}
B.~L. Rozovskii, ``On conditional distributions of degenerate diffusion
  processes,'' \emph{Theory of Probability and Its Applications}, vol.~25,
  no.~1, pp. 147--151, 1980.

\bibitem{R1984}
B.~Rozovskii, ``Filtering, smoothing and prediction of degenerate diffusion
  processes. backward equations,'' \emph{Theory of Probability and Its
  Applications}, vol.~28, no.~4, pp. 762--774, 1984.

\bibitem{GG1}
F.~Germ and I.~Gy{\"o}ngy, ``On partially observed jump diffusions {I}. {T}he
  filtering equations,'' \emph{arXiv:2205.08286}, 2022.

\bibitem{GG2}
------, ``On partially observed jump diffusions {I}{I}. {T}he filtering
  density,'' \emph{arXiv:2205.14534}, 2022.

\bibitem{GK2003}
I.~Gy{\"o}ngy and N.~Krylov, ``On the splitting-up method and stochastic
  partial differential equations,'' \emph{The Annals of Probability}, vol.~31,
  no.~2, pp. 564--591, 2003.

\bibitem{GK2003-2}
------, ``On the rate of convergence of splitting-up approximations for
  spdes,'' in \emph{Stochastic inequalities and applications}.\hskip 1em plus
  0.5em minus 0.4em\relax Springer, 2003, pp. 301--321.

\bibitem{IR2000}
K.~Ito and B.~Rozovskii, ``Approximation of the {K}ushner equation for
  nonlinear filtering,'' \emph{SIAM Journal on Control and Optimization},
  vol.~38, no.~3, pp. 893--915, 2000.

\bibitem{CLO2022}
D.~Crisan, A.~Lobbe, and S.~Ortiz-Latorre, ``An application of the splitting-up
  method for the computation of a neural network representation for the
  solution for the filtering equations,'' \emph{arXiv preprint
  arXiv:2201.03283}, 2022.

\bibitem{RU1999}
K.~Reif and R.~Unbehauen, ``The extended {K}alman filter as an exponential
  observer for nonlinear systems,'' \emph{IEEE Transactions on Signal
  processing}, vol.~47, no.~8, pp. 2324--2328, 1999.

\bibitem{AGM2020}
S.~Afshar, F.~Germ, and K.~Morris, ``Well-posedness of extended {K}alman filter
  equations for semilinear infinite-dimensional systems,'' in \emph{2020 59th
  IEEE Conference on Decision and Control (CDC)}.\hskip 1em plus 0.5em minus
  0.4em\relax IEEE, 2020, pp. 1210--1215.

\bibitem{AGM2022}
S.~Afshar, F.~Germ, and K.~A. Morris, ``Extended {K}alman filter based observer
  design for semilinear infinite-dimensional systems,'' \emph{arXiv preprint
  arXiv:2202.07797 (submitted to IEEE TAC)}, 2022.

\bibitem{LCL2007}
T.~J. Lyons, M.~Caruana, and T.~L{\'e}vy, \emph{Differential equations driven
  by rough paths}.\hskip 1em plus 0.5em minus 0.4em\relax Springer, 2007.

\bibitem{C1954}
K.-T. Chen, ``Iterated integrals and exponential homomorphisms,''
  \emph{Proceedings of the London Mathematical Society}, vol.~3, no.~1, pp.
  502--512, 1954.

\bibitem{C1958}
------, ``Integration of paths - a faithful representation of paths by
  noncommutative formal power series,'' \emph{Transactions of the American
  Mathematical Society}, vol.~89, no.~2, pp. 395--407, 1958.

\bibitem{HL2010}
B.~Hambly and T.~Lyons, ``Uniqueness for the signature of a path of bounded
  variation and the reduced path group,'' \emph{Annals of Mathematics}, pp.
  109--167, 2010.

\bibitem{BGLY2016}
H.~Boedihardjo, X.~Geng, T.~Lyons, and D.~Yang, ``The signature of a rough
  path: uniqueness,'' \emph{Advances in Mathematics}, vol. 293, pp. 720--737,
  2016.

\bibitem{CK2016}
I.~Chevyrev and A.~Kormilitzin, ``A primer on the signature method in machine
  learning,'' \emph{arXiv preprint arXiv:1603.03788}, 2016.

\bibitem{CO2018}
I.~Chevyrev and H.~Oberhauser, ``Signature moments to characterize laws of
  stochastic processes,'' \emph{arXiv preprint arXiv:1810.10971}, 2018.

\bibitem{goodfellow2014generative}
I.~Goodfellow, J.~Pouget-Abadie, M.~Mirza, B.~Xu, D.~Warde-Farley, S.~Ozair,
  A.~Courville, and Y.~Bengio, ``Generative adversarial nets,'' \emph{Advances
  in neural information processing systems}, vol.~27, 2014.

\bibitem{marc-sig}
H.~Ni, L.~Szpruch, M.~Sabate-Vidales, B.~Xiao, M.~Wiese, and S.~Liao,
  ``Sig-{W}asserstein {GAN}s for time series generation,'' \emph{arXiv preprint
  arXiv:2111.01207}, 2021.

\bibitem{yoon2019time}
J.~Yoon, D.~Jarrett, and M.~Van~der Schaar, ``Time-series generative
  adversarial networks,'' \emph{Advances in Neural Information Processing
  Systems}, vol.~32, 2019.

\bibitem{KMFL2020}
P.~Kidger, J.~Morrill, J.~Foster, and T.~Lyons, ``Neural controlled
  differential equations for irregular time series,'' \emph{Advances in Neural
  Information Processing Systems}, vol.~33, pp. 6696--6707, 2020.

\bibitem{chen2018neural}
R.~T. Chen, Y.~Rubanova, J.~Bettencourt, and D.~K. Duvenaud, ``Neural ordinary
  differential equations,'' \emph{Advances in neural information processing
  systems}, vol.~31, 2018.

\bibitem{lecun1988theoretical}
Y.~LeCun, D.~Touresky, G.~Hinton, and T.~Sejnowski, ``A theoretical framework
  for back-propagation,'' in \emph{Proceedings of the 1988 connectionist models
  summer school}, vol.~1, 1988, pp. 21--28.

\bibitem{pearlmutter1995gradient}
B.~A. Pearlmutter, ``Gradient calculations for dynamic recurrent neural
  networks: A survey,'' \emph{IEEE Transactions on Neural networks}, vol.~6,
  no.~5, pp. 1212--1228, 1995.

\bibitem{kidger2022neural}
P.~Kidger, ``On neural differential equations,'' \emph{arXiv preprint
  arXiv:2202.02435}, 2022.

\bibitem{LL2016}
D.~Levin, T.~Lyons, and H.~Ni, ``Learning from the past, predicting the
  statistics for the future, learning an evolving system,'' \emph{arXiv
  preprint arXiv:1309.0260v6}, 2016.

\bibitem{ni2020conditional}
H.~Ni, L.~Szpruch, M.~Wiese, S.~Liao, and B.~Xiao, ``Conditional
  sig-wasserstein gans for time series generation,'' \emph{arXiv preprint
  arXiv:2006.05421}, 2020.

\bibitem{morrill2021neural}
J.~Morrill, C.~Salvi, P.~Kidger, and J.~Foster, ``Neural rough differential
  equations for long time series,'' in \emph{International Conference on
  Machine Learning}.\hskip 1em plus 0.5em minus 0.4em\relax PMLR, 2021, pp.
  7829--7838.

\bibitem{AG2007}
S.~Asmussen and P.~W. Glynn, \emph{Stochastic simulation: algorithms and
  analysis}.\hskip 1em plus 0.5em minus 0.4em\relax Springer, 2007, vol.~57.

\bibitem{V2009}
C.~Villani, \emph{Optimal transport: old and new}.\hskip 1em plus 0.5em minus
  0.4em\relax Springer, 2009, vol. 338.

\bibitem{daskalakis2018limit}
C.~Daskalakis and I.~Panageas, ``The limit points of (optimistic) gradient
  descent in min-max optimization,'' \emph{Advances in Neural Information
  Processing Systems}, vol.~31, 2018.

\end{thebibliography}

\end{document}